\documentclass[letterpaper, 10 pt, conference]{ieeeconf}  
\usepackage{amssymb}  
\usepackage{bm}
\usepackage{mathtools}
\usepackage{xcolor}
\usepackage{graphicx}
\usepackage{booktabs}
\usepackage{cite}
\usepackage{url}
\usepackage{hyperref}
\usepackage{todonotes}
\let\labelindent\relax
\usepackage{enumitem}
\usepackage{multirow}
\usepackage{algorithm}
\usepackage{algpseudocode}
\usepackage{stfloats}

\usepackage{fontawesome5}
\usepackage[most]{tcolorbox}
\newtcbox{\linkpill}{
  on line,
  box align=base,
  colback=gray!10,
  colframe=gray!10,
  arc=1.5mm,
  boxrule=0pt,
  left=2pt,
  right=2pt,
  top=1pt,
  bottom=1pt
}
\definecolor{slsorange}{HTML}{D97706}
\definecolor{slsnavy}{HTML}{174A7E}

\newcommand{\R}{\mathbb{R}}
\newcommand{\nx}{{n_x}}

\newcommand{\ntheta}{{n_{\theta}}}
\newcommand{\nuu}{{n_u}}
\newcommand{\Phix}{\mathbf{\Phi}^{\mathrm{x}}}
\newcommand{\Phis}{\mathbf{\Phi}^{\mathrm{s}}}
\newcommand{\Phiu}{\mathbf{\Phi}^{\mathrm{u}}}
\newcommand{\Phit}{\mathbf{\Phi}}

\newcommand{\Psit}{\mathbf{\Psi}}

\newcommand{\BPhit}{\mathbf{\bar\Phi}}

\newcommand{\TPhit}{\mathbf{\tilde\Phi}}
\newcommand{\Mparam}[1]{\bm{M}_{#1,\xi}}
\newcommand{\Mdist}[1]{\bm{M}_{#1}}

\newtheorem{theorem}{Theorem}

\newtheorem{remark}{Remark}

\newtheorem{proposition}{Proposition}
\newtheorem{corollary}{Corollary}

\newcommand{\delete}[1]{}

\IEEEoverridecommandlockouts                              

\usepackage{graphicx} 

\title{\LARGE \bf
Planning to Learn: Real-Time Robust Adaptive Control on the GPU with Guaranteed Future Learning
}

\author{Anutam Srinivasan$^{1}$\and Siddhant Panse$^{1}$\and Glen Chou$^{2}$
\thanks{$^{1}$Georgia Institute of Technology, School of Electrical \& Computer Engineering, \texttt{\{\href{mailto:asrinivasan350@gatech.edu}{asrinivasan350},\href{mailto:spanse30@gatech.edu}{spanse30}\}@gatech.edu}}%
\thanks{$^{2}$Georgia Institute of Technology, Schools of Cybersecurity \& Privacy, Aerospace Engineering, \texttt{\href{mailto:chou@gatech.edu}{chou@gatech.edu}}}%
}

\begin{document}

\maketitle
\thispagestyle{empty}
\pagestyle{empty}

\begin{abstract}
Robust adaptive control is vital for keeping robots safe while actively reducing uncertainty in unfamiliar settings. However, traditional methods are often conservative since they fail to account for guaranteed future parameter learning during planning and struggle to scale to high-dimensional systems. In this work, we present A-SLS, a learning-aware robust adaptive model predictive control (RAMPC) framework based on system-level synthesis. A-SLS forecasts future parameter uncertainty and preserves correlations between process disturbances, parameter error, and estimate updates to construct tight, guaranteed reachable sets that account for future information gain. We also develop a GPU-accelerated A-SLS solver to enable real-time performance. Across nonlinear robotic systems, A-SLS reduces reachable-tube conservativeness, maintains real-time performance, and scales to whole-body control for a quadruped with 61 states, 8 uncertain parameters, and 12 control inputs. We further demonstrate the approach on hardware, where a physical Crazyflie drone learns an uncertain dynamics parameter online to stabilize on the fly.
\end{abstract}
\begin{center}
\vspace{-2.5pt}
{\linkpill{\href{https://trustworthyrobotics.github.io/adaptive_sls/}{\textcolor{slsnavy}{\faGlobe\ \texttt{Website}}}}
\quad
\linkpill{\href{https://github.com/trustworthyrobotics/adaptive_sls}{\textcolor{slsnavy}{\faGithub\ \texttt{Code}}}}\quad
\linkpill{\href{https://youtu.be/9bq6-69Yino}{\textcolor{slsnavy}{\faYoutube\ \texttt{Video}}}}}
\vspace{-2.5pt}
\end{center}

\begin{figure}[!b]
    \centering
    \includegraphics[width=\linewidth]{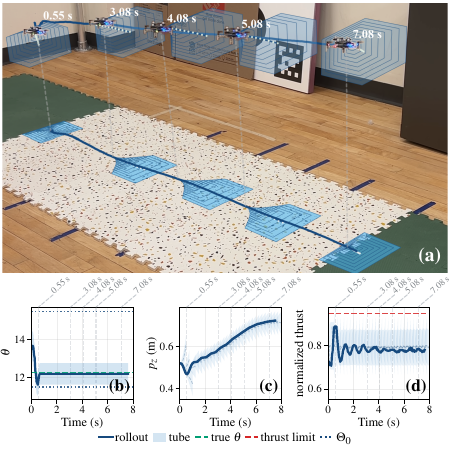}\vspace{-25pt}
    \caption{\looseness-1\textbf{Hardware Crazyflie drone.} An example rollout of the Crazyflie learning an unknown thrust-to-acceleration parameter. (a) Rollout with forecasted tubes projected at selected timestamps. (b) Parameter uncertainty tubes and reduction over time. (c) The $p_z$ position and corresponding tubes over time. (d) Thrust and corresponding tubes with the thrust limit.}
    \label{fig:front_figure_quadrotor_hardware.pdf}
\end{figure}
\section{Introduction}

Robots operating in unstructured environments must act safely despite uncertainty in their dynamics and surroundings. Robust control is one approach to achieve this, providing worst-case guarantees by treating uncertainty as persistently adversarial, propagating reachable tubes over admissible uncertainty, and tightening constraints accordingly. While this is appropriate for irreducible \emph{aleatoric} uncertainty, it can be overly conservative for \emph{epistemic} uncertainty arising from unknown but learnable physical parameters such as mass, friction, wind, or motor damage. Because this structured uncertainty can be reduced through interaction, robust adaptive control can become less conservative as parameter estimates improve online. As data is collected and parameter uncertainty shrinks, controllers can safely exploit the reduced uncertainty to improve performance. Moreover, because the rate of learning depends on how informative the chosen trajectory is, control actions can directly influence future parameter uncertainty, creating a tradeoff between performance, robustness, and information gathering.

\begin{figure}[!b]
    \centering\vspace{-10pt}
    \includegraphics[width=\linewidth]{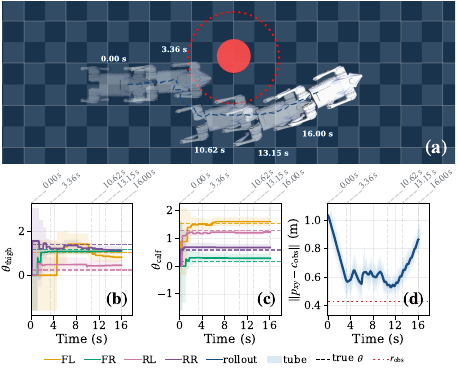}\vspace{-21pt}
    \caption{\looseness-1\textbf{61D quadruped, whole-body control.} (a) A sample A-SLS quadruped rollout while learning 8 joint-damping parameters. (b-c) Subset of parameter tubes: learned thigh/calf parameters and uncertainty tubes for the front left (FL), front right (FR), rear left (RL), and rear right (RR) legs. (d) Obstacle proximity/projected tube over time. Using the learned parameters, A-SLS stabilizes the quadruped while maintaining safe obstacle clearance.}
    \label{fig:quadruped_rollout_figure}
\end{figure}

However, designing robust adaptive controllers for robotics remains an open problem, with two central challenges. First, synthesizing adaptive controllers while jointly reasoning about estimation error, aleatoric uncertainty, and parametric uncertainty is computationally expensive for long-horizon, high-dimensional nonlinear systems, yet practical robotic systems require real-time learning, verification, and controller adaptation. Second, many existing methods update parameter estimates online but plan using the \emph{current} parameter uncertainty set, incorporating newly learned information only \emph{reactively} after measurements are collected. As a result, they cannot exploit guaranteed \emph{future} uncertainty reduction during planning and may produce unnecessarily large reachable tubes even along informative trajectories. For example, a drone traversing an uncertain but learnable wind field should be able to plan knowing that it can measure, estimate, and compensate for the wind during execution. Ignoring this future learning can degrade performance or prevent task completion, while also leading to myopic planning that fails to seek informative trajectories that accelerate uncertainty reduction. Addressing these challenges requires jointly modeling the state and parameter-estimate dynamics while efficiently forecasting the worst-case learning guaranteed under all admissible uncertainties.

\looseness-1In this work, we close these gaps with A-SLS, a scalable robust adaptive policy optimization framework that explicitly accounts for future learning during planning. Building on system-level synthesis (SLS) \cite{anderson2019system}, we augment the physical state with the online parameter estimate so that estimator evolution becomes part of the closed-loop dynamics. We propagate bounded parameter uncertainty and parameter-estimate shifts over the planning horizon, while adaptive SLS responses preserve correlations among process disturbances, parameter error, and estimate updates to construct tighter reachable tubes. Crucially, this enables \emph{assured future learning}: for each candidate trajectory, we forecast the parameter uncertainty guaranteed to remain after future observations, allowing the optimizer to exploit worst-case future tube contraction without assuming favorable measurements. Because informativeness depends on the trajectory, the same formulation can also favor motions that accelerate uncertainty reduction. Finally, the structured SLS policy class admits efficient parallel reachability and robust constraint tightening, enabling GPU-parallel optimization for long-horizon, high-dimensional nonlinear systems. Our contributions are:
\begin{itemize}
    \item We formulate learning-aware robust adaptive control as an SLS problem that augments the state with the online parameter estimate, returning policies that adapt to online data while remaining robust to residual uncertainty.
    \item We derive reachable sets that preserve correlations between disturbances, parameter error, and estimate shifts, enabling tightened robust tubes that account for and guarantee future learning. These information-sensitive tubes also enable planning that favors informative trajectories that reduce uncertainty and improve performance.
    \item We develop a GPU-parallel solver that enables real-time long-horizon control on high-DoF nonlinear systems.
    \item We evaluate safety, tube tightness, parameter learning, active information gathering, and scalability on high-dimensional nonlinear robotic systems and hardware.
\end{itemize}

\section{Related Work}

Classic robust adaptive control~\cite{ioannou1984robust,ioannou1996robust,polycarpou1996robust,astrom1995adaptive,cao2008l1} establishes stability guarantees under disturbances and parametric uncertainty. Recent work extends these ideas to nonlinear and unmatched uncertainty using contraction theory~\cite{lopez2019contraction,lopez2020universal,tsukamoto2021learning,gessow2023adaptive} and $\mathcal{L}_1$-LPV control~\cite{zhao2024lpv}. Other approaches develop SLS-based robust control under parametric uncertainty~\cite{leeman2023robust} and robust adaptive control for linear systems~\cite{ho2019scalable}. While these methods provide strong stability guarantees, they generally do not optimize long-horizon constrained trajectories or anticipate how planned actions reduce uncertainty; contraction-based approaches also require challenging metric synthesis and online geodesic computation. In contrast, our method uses SLS for robust adaptive nonlinear trajectory optimization, coupling closed-loop reachability with guaranteed future learning and real-time GPU-parallel synthesis.

Other methods incorporate adaptation into safety-critical control. Adaptive and robust-adaptive control barrier functions (CBFs) provide forward-invariance guarantees under parametric uncertainty~\cite{taylor2020adaptive,lopez2021racbf,xiao2022adaptive}, with later applications to robotics~\cite{minniti2022adaptive,sinaei2026safe}. Many reduce online control to small QPs or impose Lyapunov conditions within MPC. However, synthesizing adaptive CBFs/CLFs can be challenging, and these methods typically provide pointwise invariance or stability guarantees without explicitly planning trajectories to improve parameter learning. Our approach instead couples closed-loop reachability and planning through shared uncertainty sources affecting physical states and future parameter estimates, yielding tighter robust tubes while making information gathering part of the trajectory optimization.

Robust adaptive model predictive control (RAMPC) combines online model learning with constrained planning. Early work used conservative constraint tightenings or separate robust safety and learned performance models~\cite{adetola2011robust,aswani2013provably}. Set-membership identification was later integrated with robust MPC for linear~\cite{lorenzen2019robust} and nonlinear systems~\cite{kohler2021robust,sinha2022matched}, with extensions addressing convergence~\cite{lu2021performance}, safety certification~\cite{didier2021adaptive}, $\mathcal{L}_1$ adaptation~\cite{pereida2021robust}, contraction~\cite{sasfi2023robust}, and distributed control~\cite{faliero2026robust}. However, most methods exploit learning only after uncertainty updates are observed rather than anticipating guaranteed future estimator evolution, and can be computationally demanding online. Our adaptive SLS formulation instead jointly propagates parameter-estimation and disturbance dynamics, preserves their correlations for tighter tubes, and enables real-time GPU-parallel synthesis. 

Finally, dual and active-learning MPC recognize that control affects both task execution and future model quality~\cite{mesbah2018stochastic}. Persistent-excitation constraints can promote parameter convergence in RAMPC~\cite{lu2023persistent}, while~\cite{parsi2023dual} embeds predicted set-membership updates in linear MPC for informative exploration. These methods closely align with our planning-to-learn objective. However, persistent excitation is an indirect proxy for task-relevant uncertainty reduction, while exact future set-membership reasoning can be expensive and difficult to scale. Our method instead optimizes worst-case parameter-estimation dynamics within a scalable SLS framework, exploiting assured future tube contraction while retaining robust nonlinear constraints and GPU-parallel computation for long-horizon, high-dimensional systems.

\section{Preliminaries and Problem Statement}
\looseness-1\noindent\textbf{Notation:} The symbols \(I_n \in \mathbb{R}^{n\times n}\), \(0_{n\times m} \in \mathbb{R}^{n\times m}\), and \(0_n := 0_{n \times 1}\) denote the identity matrix, zero matrix, and zero vector, respectively. The $i$-th standard coordinate row vector is denoted by $\bar{e}_i^{(n)}\in\mathbb{R}^{1 \times n}$, whose $i$-th entry is $1$ and all other entries are $0$. The Frobenius norm of a matrix \(A \in \mathbb{R}^{m \times n}\) is given by \(\| A \|_{\mathcal{F}} := \sqrt{\text{Trace}(A^\top A)}\). For $A \in \mathbb{R}^{n\times m}$, $[A]_{k,j}$ denotes the element in the $k$-th row and $j$-th column of $A$. Similarly, we use \([\mathbf{\Phi}]_{k,j}\) to refer to the block at the \(k\)-th row and \(j\)-th column of a block matrix \(\mathbf{\Phi}\). For integers \(M,N \in \mathbb{N}\), index sets are defined as \([N] := \{0,\ldots,N-1\}\) and \([M,N] := \{M,\ldots,N\}\). For a matrix product, indices increase from right to left, i.e., $\prod_{m=\ell}^{j-1} A_m := A_{j-1}\cdots A_\ell$. For a matrix \(A \in \mathbb{R}^{n \times p}\), its row-wise \(\ell_1\)-norm is the vector \(\| A \|_{1,\text{r}} \coloneqq [\|A_{1,:}\|_1, \ldots, \|A_{n,:}\|_1]^\top \in \mathbb{R}^{n}\). The diagonal matrix constructed from \(a \in \mathbb{R}^n\) is written as \(\text{diag}(a)\). 
The Minkowski sum of sets \(A\) and \(B\) is defined by \(A \oplus B := \{a + b \mid a \in A, b \in B\}\) and $\bigoplus$ denotes the iterated Minkowski sum. We define the unit $\ell_\infty$-ball as $\mathcal{B}^{n}_\infty:=\{z\in\mathbb{R}^{n}\mid\|z\|_\infty \le 1\}$. 

\noindent\textbf{Definitions:} We consider nonlinear dynamics with both parametric and exogenous worst-case uncertainty,
\begin{equation}\label{eq:uncert_param_system}
x_{k+1} = f(x_k,u_k;\theta) + F(x_k)w_k,
\end{equation}
where $x_k \in \mathcal{X} \subseteq \R^\nx$ is the system state, $u_k \in \mathcal{U}\subseteq \R^\nuu$ is the control input, and $\theta \in \Theta_0 \subseteq\R^\ntheta$ is an unknown parameter (e.g., mass) contained in an initial zonotopic uncertainty set $\Theta_0$. The map $f:\R^\nx\times\R^\nuu\times\R^\ntheta \to \mathbb{R}^{\nx}$ defines the nonlinear dynamics, while $F:\mathbb{R}^{\nx} \to \mathbb{R}^{\nx \times \nx}$ scales the bounded disturbance $w_k \in  \mathcal{B}^{n_x}_\infty$. Because $\theta$ is unknown at planning time, the controller begins from an initial estimate $\hat{\theta}_0$ and refines it online as measurements are collected, yielding $\hat{\theta}_k$ at time $k$. We seek a robust-adaptive policy with (i) a feedback component that tracks the planned state trajectory and (ii) a feedforward component that exploits updated parameter estimates as uncertainty is reduced. The policy must robustly satisfy all state and input constraints despite residual parametric uncertainty and process noise. We therefore seek to solve the following robust-adaptive nonlinear optimal control problem (RANOCP):
\begin{subequations} \label{eq:ranocp}
    \begin{align}
        \hspace{-20mm}\min_{\pi(\cdot)} \ &J\left(x_1,\dots,x_N, \pi(\cdot)\right) \\
        \text{s.t.} \ & x_{k+1} = f(x_k, u_k;\hat\theta_k) + E(x_k,u_k,\hat\theta_k;\theta) \nonumber\\ &\qquad\qquad+ F(x_k)w_k, \quad \forall k \in [N],
        \\
                          & x_0 = \bar{x}_0, \quad\hat\theta_0 = \hat\theta,\\ 
                          & \hat\theta_k = m_k(\hat\theta_0, r_1, \dots, r_k; \theta), \quad\forall k\in[N]\\
                          & u_k = \pi_k(x_{0},\dots,x_{k}, \hat\theta_0, \dots,\hat\theta_k), \quad \forall k \in [N], \\
                          & g_k(x_k, u_k) \leq 0,  \,\,\forall w_1,\dots,w_k \in \mathcal{B}^{n_x}_\infty,\nonumber\\
                          & \quad\qquad\qquad\qquad \forall \theta \in \Theta_k,\,\forall k \in [N], \label{eq:robust_nocp_con_running}\\
                          & g_N(x_N) \leq 0, \,\, \forall w_1,\dots,w_N \in \mathcal{B}^{n_x}_\infty,\,\, \forall \theta \in \Theta_N,\label{eq:robust_nocp_con_terminal}
    \end{align}
\end{subequations}
\looseness-1where $\pi:=\{\pi_i\}_{i=0}^{N-1}$ is a sequence of causal control policies, $m_k:\mathbb{R}^{\ntheta}\times(\mathbb{R}^{\nx})^k \times\mathbb{R}^{\ntheta}\to \mathbb{R}^{\ntheta}$ is a measurement function based on the history of residual measurements $r_k$, $\bar{x}_0\in\mathbb{R}^{\nx}$ is the initial state, and $\hat{\theta}, \theta\in\mathbb{R}^{\ntheta}$ are the initial parameter guess and true parameter, respectively. The set $\Theta_k$ denotes the parameter uncertainty set at step $k$, with $\hat{\theta}_k\in\Theta_k$ the parameter estimate. The functions $g_k:\mathbb{R}^{n_x}\times\mathbb{R}^{n_u}\rightarrow\mathbb{R}^{n_c}$ denote stagewise state-input constraints, while $g_N:\mathbb{R}^{n_x}\rightarrow\mathbb{R}^{n_c^f}$ denotes the terminal constraint. Lastly, $E:\mathbb{R}^{\nx}\times\mathbb{R}^{\nuu}\times\mathbb{R}^{\ntheta}\times\mathbb{R}^{\ntheta}\rightarrow\mathbb{R}^{n_x}$ represents the residual dynamics error from an incorrect parameter estimate.

Directly solving \eqref{eq:ranocp} is typically intractable~\cite{fang2026gpu}. Thus, we instead optimize a nominal state-control trajectory and robust tracking controller, perform closed-loop reachability around the trajectory, and enforce constraints over the resulting reachable set. We formulate this as an SLS problem~\cite{anderson2019system}, solved using GPUSLS~\cite{fang2026gpu} and GPUSLS-LEO~\cite{fang2026safe}.

\noindent\textbf{System Level Synthesis (SLS):} SLS is a robust controller synthesis framework. Specifically, SLS optimizes causal feedback gains to track a reference trajectory. For the following uncertain LTV system, 
\begin{equation}\label{eq:ltv_state}
    x_{k+1} = A_kx_k + B_ku_k + F_kw_k,
\end{equation}
the closed-loop control input is
\begin{equation}\label{eq:disturbance_feedback}
    u_k = v_k + \Delta u_k =  v_k + \textstyle\sum_{j=0}^{k-1} \Phiu_{k,j} F_jw_j,
\end{equation}
where $v_k\in\mathbb{R}^{\nuu}$ is the nominal control, $\Delta u_k$ is the feedback correction, and $\Phiu_{k,j}\in\mathbb{R}^{\nuu\times\nx}$ maps each past disturbance $F_jw_j$ to the control input.
Under this policy, the closed-loop state response is
\begin{equation}\label{eq:SLS_state}
x_k= z_k + \Delta x_k = z_k + \textstyle\sum_{j=0}^{k-1} \Phix_{k,j} F_jw_j, \qquad z_0 = \bar x_0,
\end{equation}
where $z_k\in\mathbb{R}^{n_x}$ is the nominal state, $\Delta x_k$ is the state deviation, and $\Phix_{k,j}\in\mathbb{R}^{\nx\times\nx}$ maps disturbance $F_jw_j$ to its effect on $x_k$. We set $\Phix_{j+1,j}=I_{\nx}$ and propagate $\Phix_{k+1,j}=A_k\Phix_{k,j}+B_k\Phiu_{k,j}$,
for all $j\in[N]$ and $k\in[j+1,N-1]$.
For nonlinear systems with nominal dynamics $z_{k+1}=f(z_k,v_k)$, we construct the LTV system in \eqref{eq:ltv_state} by linearizing about the nominal trajectory:
\begin{equation}\label{eq:linearizations}
\textstyle A_k := \frac{\partial f}{\partial x}\big|_{(z_k,v_k)}, \ \ 
B_k := \frac{\partial f}{\partial u}\big|_{(z_k,v_k)}, \ \ 
F_k := F(z_k).
\end{equation}
Taylor models~\cite{fang2026gpu} provide linearization error bounds (LEBs) that can be efficiently computed on the GPU \cite{shen2026parallel} and incorporated as an additional bounded disturbance by replacing $F_kw_k$ with $F_kw_k + F^\textrm{nl}_kw^\textrm{nl}_k$, where $F^\textrm{nl}_kw^\textrm{nl}_k$ captures the linearization error, with $w^\textrm{nl}_k\in\mathcal{B}^{\nx}_\infty$~\cite{fang2026safe}.

\noindent\textbf{Problem Statement (Learning-Aware Robust Adaptive Control):}
Given the uncertain nonlinear system \eqref{eq:uncert_param_system}, initial state $\bar{x}_0$, parameter estimate $\hat{\theta}_0$, uncertainty set $\Theta_0$, bounded disturbance $w \in \mathcal{B}^{\nx}_\infty$, horizon $N$, and constraints $g_k$ and $g_N$, our goal is to solve the RANOCP~\eqref{eq:ranocp} by jointly planning a nominal state-control trajectory and synthesizing a causal adaptive policy $\pi$. Using observed transition residuals $r_k$, we update the parameter estimate $\hat{\theta}_k$ online while bounding the future uncertainty sets $\Theta_k$ \emph{a priori}. The policy must minimize the control objective while robustly satisfying all constraints for every admissible parameter and disturbance realization. Notably, planning accounts for the worst-case future parameter uncertainty induced by measurements along the planned trajectory, allowing predicted reachable tubes to exploit guaranteed future learning rather than treating the current parameter uncertainty set as fixed over the horizon.


\section{Methodology}
\looseness-1In this section, we present the components of our adaptive SLS formulation (A-SLS) used to plan with guaranteed future learning. We first derive the parameter-estimation (Sec. \ref{subsec:method_parameter_estimation}) and uncertainty-update dynamics (Sec. \ref{subsec:bounded_update}), then embed them into SLS to construct correlation-preserving reachable sets (Sec. \ref{subsec:method_sls}) and robust constraint tightenings (Sec. \ref{subsec:robust_constraints}). We then extend to nonlinear dynamics and discuss the resulting robust-adaptive optimization problem (Sec. \ref{subsec:overall_optimization}). 

\subsection{Parameter Estimation Dynamics}\label{subsec:method_parameter_estimation}

We begin with the parametric LTV dynamics
\begin{equation}\label{eq:ltv_parametric}
    x_{t+1} = A_tx_t + B_tu_t + E_t\theta + F_tw_t,
\end{equation}
where $E_t \in\mathbb{R}^{\nx\times\ntheta}$ is the parameter sensitivity matrix describing how $\theta$ affects the dynamics.
To adapt the controller, we estimate unknown parameters from observed state transitions. Given the applied control $u_t$ and measured successor state $x_{t+1}$, the only unknown quantities in \eqref{eq:ltv_parametric} are $\theta$ and $w_t$. To mitigate the effects of noisy measurements on estimation accuracy, we introduce a binary diagonal measurement gate $C_t \in \mathbb{R}^{\nx\times\nx}$ that selects the state measurement channels used for parameter estimation. We then define 
\begin{equation}\label{eq:gated_information}
H_t:=C_tE_t = U_t \Sigma_t V_t^\top,
\quad
H_{t,\epsilon}^{\dagger}
=
V_t \Sigma_{t,\epsilon}^{\dagger} U_t^\top,
\end{equation}
where $H_t\in \mathbb{R}^{\nx\times\ntheta}$ is the gated parameter sensitivity and $U_t\Sigma_tV_t^\top$ is its singular value decomposition (SVD). The SVD identifies which directions in parameter space are observable through the selected measurements. However, inverting small singular values can amplify disturbance noise. To avoid this, we define the truncated pseudoinverse:
\begin{equation}
[\Sigma_{t,\epsilon}^{\dagger}]_{ii}
=
\begin{cases}
\sigma_{t,i}^{-1}, & \sigma_{t,i}\geq\epsilon_t,\\
0,                 & \sigma_{t,i}<\epsilon_t,
\end{cases}
\,
\epsilon_t=\max\{\epsilon_{\mathrm{abs}},
                 \epsilon_{\mathrm{rel}}\sigma_{t,1}\},
\end{equation}
where $\epsilon_t$, $\epsilon_{\mathrm{abs}}$, and $\epsilon_{\mathrm{rel}}$ determine which singular values are inverted~\cite{golub1965calculating,golub2013matrix}. This allows parameter estimation even when some components of $\theta$ are unobservable through $H_t$.
We then define the gated measurement map $\Pi_t\in\mathbb{R}^{\ntheta\times\ntheta}$ and measurement uncertainty map $W_t\in\mathbb{R}^{\ntheta\times\nx}$:
\begin{equation}\label{eq:param_uncertainty_definition}
\Pi_t := H_{t,\epsilon}^{\dagger}H_t,
\,\quad
W_t:=H_{t,\epsilon}^{\dagger}C_tF_t.
\end{equation}
Here, $\Pi_t$ projects onto the retained observable parameter directions, while $W_t$ captures how disturbances propagate into the parameter measurement. Thus, applying the truncated pseudoinverse to the gated transition residual $r_t$ yields 
\begin{equation}\label{eq:parameter_measure}
\tilde{\theta}_{t+1}
=
H_{t,\epsilon}^{\dagger}\underbrace{C_t
\left(x_{t+1}-A_tx_t-B_tu_t\right)}_{r_t}
=
\Pi_t\theta + W_tw_t.
\end{equation}
Hence, $\tilde{\theta}_{t+1}$ provides a noisy measurement of the observable component of $\theta$, which we use to update the parameter estimate and its uncertainty set.

\subsection{Bounded Set Update}\label{subsec:bounded_update}

\looseness-1Using the parameter measurement model of Sec. \ref{subsec:method_parameter_estimation}, we now develop a set-valued estimation update that propagates bounded parameter uncertainty while accounting for newly observed information. 
Since the parameter-estimation error evolves affinely under bounded disturbances and zonotopes are closed under linear transformations and Minkowski sums, the initially zonotopic parameter uncertainty set $\Theta_0$ admits a zonotopic outer approximation at every future timestep. Specifically, given the uncertainty set $\Theta_t := \hat\theta_t \oplus G_t\mathcal{B}^{p_t}_\infty$, where $\hat\theta_t$ is the current parameter estimate and $G_t$ contains $p_t$ generators, we update the zonotope center and generators as
\begin{align}
    \hat{\theta}_{t+1}
    &= \hat{\theta}_{t}
    +K_{t}\left(\tilde{\theta}_{t+1}-\Pi_{t}\hat{\theta}_{t}\right)
    \label{eq:zonotope_center_update} \\
    G_{t+1}
    &= \begin{bmatrix}
    (I_{\ntheta}-K_{t}\Pi_{t})G_{t} & K_{t}W_{t}
    \end{bmatrix},
    \label{eq:zonotope_gen_update}
\end{align}
where $K_t \in \mathbb{R}^{\ntheta\times\ntheta}$ is the Kalman gain. The center update incorporates newly observed parameter information, while the generator update propagates residual parameter uncertainty and measurement disturbances and is efficient to compute online, ensuring that $\Theta_{t+1}$ contains the true parameter, i.e., $\theta \in \Theta_{t+1}$, for all admissible disturbances \cite{alamo2005guaranteed}. 
Since $G_{t+1}$ determines the parameter uncertainty in $\Theta_{t+1}$, we minimize the Frobenius norm of \eqref{eq:zonotope_gen_update} as a surrogate for zonotope volume by choosing
\begin{equation}\label{eq:gain_update_law}
    K_{t}
    =G_{t}G_{t}^\top\Pi_{t}^\top
    \left(
    \Pi_{t}G_{t}G_{t}^\top\Pi_{t}^\top
    +W_{t}W_{t}^\top
    \right)^{\dagger}.
\end{equation}
Since each update in \eqref{eq:zonotope_gen_update} adds $n_x$ additional disturbance generators, the number of generators grows by $n_x$ at each stage. To maintain tractability, Girard reduction can replace the resulting zonotope with a lower-order outer approximation, limiting memory growth~\cite{girard2005reachability}. To prevent the uncertainty bound from increasing in any parameter dimension, we first compute $K_t$ from \eqref{eq:gain_update_law} and compare the row-wise $\ell_1$ norms $\|G_{t+1}\|_{1,\mathrm{r}}$ and $\|G_t\|_{1,\mathrm{r}}$, which give the uncertainty radii for individual parameters. If an entry increases, we set the corresponding row of $K_t$ to zero and recompute the update. This leaves that dimension's uncertainty radius unchanged, guaranteeing nonincreasing coordinate-wise uncertainty while preserving containment of the true parameter. The resulting parameter uncertainty set is $\Theta_{t+1}=\hat\theta_{t+1}\oplus G_{t+1}\mathcal{B}^{p_{t+1}}_\infty$. 
Although the future set center $\hat\theta_{t+1}$ is unknown at planning time, $W_t$ in \eqref{eq:param_uncertainty_definition} can be computed \emph{a priori} along a candidate trajectory. This enables us to evaluate $K_t$ using \eqref{eq:gain_update_law} and recursively propagate \eqref{eq:zonotope_gen_update} to forecast $G_t$ for all $t\in[N]$. Consequently, the parameter-estimation error satisfies
\begin{equation}\label{eq:param_error_w_bound}
e_t := \theta-\hat\theta_t \in G_t\mathcal{B}^{p_t}_\infty,
\end{equation}
where the bounding zonotope is known \emph{a priori}, without requiring future measurements. This allows the controller to anticipate guaranteed future parameter uncertainty reduction during planning. Similarly, using \eqref{eq:zonotope_center_update}-\eqref{eq:zonotope_gen_update}, we can bound the parameter-estimate shift as follows: 
\begin{subequations}\label{eq:param_shift_w_bound}
\begin{align}
\Delta\hat\theta_t
&:= \hat\theta_{t+1}-\hat\theta_t
\label{eq:param_shift}\\
&\in
\begin{bmatrix}
K_t\Pi_tG_t & K_tW_t
\end{bmatrix}
\mathcal{B}_\infty^{p_t+n_x},
\label{eq:param_shift_bound}
\end{align}
\end{subequations}
whose bounding zonotope is also known during planning.

\subsection{System Level Synthesis Formulation}\label{subsec:method_sls}
\looseness-1To compute a robust adaptive controller responding to both state disturbances and updated parameter estimates, we augment the state with the parameter estimate. We rewrite \eqref{eq:ltv_parametric} as
\begin{equation} \label{eq:ltv_with_estimate}
x_{t+1} = A_tx_t + B_tu_t + E_t\theta - E_t\hat\theta_t + E_t\hat\theta_t + F_tw_t,
\end{equation}
while \eqref{eq:param_shift} gives the parameter-estimate dynamics
\begin{equation}\label{eq:theta_dynamics}
\hat\theta_{t+1} = I_{\ntheta}\hat\theta_t + I_{\ntheta}\Delta\hat\theta_t.
\end{equation}
Defining
\begin{equation}\label{eq:augment_terms}
\begin{aligned}
\bar{A}_t&:=\begin{bmatrix}
A_t & E_t \\ 0_{n_\theta \times n_x} & I_{n_\theta}
\end{bmatrix}, \quad
\bar{B}_t:=\begin{bmatrix}
B_t \\ 0_{n_\theta \times n_u}
\end{bmatrix},\\
\bar{F}_t&:= \begin{bmatrix}
F_t & E_t & 0_{n_x\times n_\theta} \\
0_{n_\theta \times n_x} & 0_{n_\theta \times n_\theta} & I_{n_\theta}
\end{bmatrix}, \quad
s_t:=\begin{bmatrix}
x_t \\ \hat\theta_t
\end{bmatrix},
\end{aligned}
\end{equation}
we construct augmented dynamics that include bounded but unknown changes in parameter estimates, resembling \eqref{eq:ltv_state}:
\begin{equation}\label{eq:augmented_system}
s_{t+1} = \bar{A}_ts_t + \bar{B}_tu_t + \bar{F}_t
\begin{bmatrix}
w_t^{\top} & e_t^{\top} & \Delta\hat\theta_t^{\top}
\end{bmatrix}^{\top}.
\end{equation}
During planning, \eqref{eq:augmented_system} allows us to model parameter-estimation errors and estimate shifts as bounded disturbances. We define augmented SLS gains $\Phis_{k,j}\in\mathbb{R}^{(\nx+\ntheta)\times(\nx+\ntheta)}$ and $\Phiu_{k,j}\in\mathbb{R}^{\nuu\times(\nx+\ntheta)}$ for the state and control responses, yielding
\begin{equation}
\begin{aligned}
\Phis_{j+1,j} = I_{\nx + \ntheta}, \qquad \Phis_{k+1,j} = \bar{A}_k \Phis_{k,j} + \bar{B}_k\Phiu_{k,j},\label{eq:sls_gain_prop}
\end{aligned}
\end{equation}
for $k>j$. 
Define the augmented state deviation $\Delta s_k:=s_k-p_k$ from the nominal augmented state $p_k$, and the control deviation $\Delta u_k:=u_k-v_k$. Then
\begin{equation}\label{eq:reachability_augmented}
\begin{bmatrix}
\Delta s_k\\ \Delta u_k
\end{bmatrix}
=\sum_{j=0}^{k-1}\Phit_{k,j}
\begin{bmatrix}
F_jw_j+E_je_j\\
\Delta\hat\theta_j
\end{bmatrix},
\end{equation}
where {\small$\Phit_{k,j} = \begin{bmatrix} {\Phis_{k,j}}^{\top} & {\Phiu_{k,j}}^{\top}\end{bmatrix}^{\top}$}. Treating $w_j$, $e_j$, and $\Delta\hat\theta_j$ as independent disturbances and applying their respective bounds \eqref{eq:param_error_w_bound} and \eqref{eq:param_shift_bound} would yield a conservative bound on \eqref{eq:reachability_augmented}. Instead, we observe that $e_j$ and $\Delta\hat\theta_j$ are correlated with the disturbance history $\{w_\ell\}_{\ell=0}^{j}$. Formally,
\begin{proposition}\label{prop:disturbance_expression}
Let $e_0 = G_0\xi_0$, where $\xi_0 \in \mathcal{B}^{p_0}_\infty$ represents the initial parameter-error realization, with $p_0$ generators, and define the Kalman gain product
\begin{equation}
\Psi_{j,\ell} =
\begin{cases}
\textstyle \prod_{m=\ell}^{j-1}
(I_{\ntheta}-K_m\Pi_m), & j>\ell,\\
I_{\ntheta}, & j=\ell,\\
0_{\ntheta\times\ntheta}, & \text{otherwise}.
\end{cases}
\end{equation}
Then the parameter error $e_j$ and estimate shift $\Delta\hat{\theta}_j$ (Eqs. \eqref{eq:param_error_w_bound}-\eqref{eq:param_shift}) can be expressed in terms of the initial parameter error and disturbances $w_\ell$ for $\ell\le j$:
\begin{align}
e_j
&= \Psi_{j,0}G_0\xi_0
-\textstyle\sum_{\ell=0}^{j-1}
\Psi_{j,\ell+1}K_\ell W_\ell w_\ell,
\label{eq:param_error_dist}\\
\Delta\hat{\theta}_j
&= K_j\Pi_j\Psi_{j,0}G_0\xi_0
+K_jW_jw_j
\nonumber\\
&\qquad
-\textstyle\sum_{\ell=0}^{j-1}
K_j\Pi_j\Psi_{j,\ell+1}K_\ell W_\ell w_\ell.
\label{eq:param_shift_dist}
\end{align}
\end{proposition}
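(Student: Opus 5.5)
Our plan is to derive a one-step linear recursion for the parameter error $e_j=\theta-\hat\theta_j$ and unroll it by induction on $j$, exactly as one would for a time-varying linear system driven by $w$. Substituting the measurement model \eqref{eq:parameter_measure}, $\tilde\theta_{t+1}=\Pi_t\theta+W_tw_t$, into the center update \eqref{eq:zonotope_center_update} gives
\begin{equation*}
\hat\theta_{t+1}=\hat\theta_t+K_t\Pi_t(\theta-\hat\theta_t)+K_tW_tw_t = \hat\theta_t + K_t\Pi_t e_t + K_tW_tw_t .
\end{equation*}
This immediately yields the shift identity $\Delta\hat\theta_t=K_t\Pi_te_t+K_tW_tw_t$. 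Subtracting from $\theta$ gives the error recursion $e_{t+1}=(I_{\ntheta}-K_t\Pi_t)e_t-K_tW_tw_t$. The only subtlety here is that $K_t$ may have had some rows zeroed by the monotonicity safeguard of Sec.~\ref{subsec:bounded_update}. We note that this does not matter: whatever final $K_t$ is used (computed \emph{a priori} from $G_t,\Pi_t,W_t$ and independent of the realization), the same $K_t$ appears in both the center update and in $\Psi$, so the recursion is linear with deterministic coefficients.

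Next we prove \eqref{eq:param_error_dist} by induction on $j$. For $j=0$ the sum is empty and $\Psi_{0,0}=I_{\ntheta}$, so the claim reduces to the hypothesis $e_0=G_0\xi_0$. For the inductive step, we assume the formula at $j$ and apply the recursion. This requires the two product identities $(I_{\ntheta}-K_j\Pi_j)\Psi_{j,\ell}=\Psi_{j+1,\ell}$ for $\ell\le j$, which hold because products are ordered with indices increasing right to left, and $\Psi_{j+1,j+1}=I_{\ntheta}$. The propagated terms become $\Psi_{j+1,0}G_0\xi_0-\sum_{\ell=0}^{j-1}\Psi_{j+1,\ell+1}K_\ell W_\ell w_\ell$. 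The new term $-K_jW_jw_j$ equals $-\Psi_{j+1,j+1}K_jW_jw_j$, which is exactly the $\ell=j$ summand, closing the induction.

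Finally, \eqref{eq:param_shift_dist} follows by substituting \eqref{eq:param_error_dist} into $\Delta\hat\theta_j=K_j\Pi_je_j+K_jW_jw_j$ and distributing $K_j\Pi_j$. The main point requiring care, rather than a genuine obstacle, is the bookkeeping of the product ordering and the index shift $\ell+1$ in $\Psi_{j,\ell+1}$. We should also remark that Girard reduction affects only the bounding generators $G_t$ used for set bounds, not the point-wise error recursion. Hence the identities hold exactly for every realization, and they are consistent with \eqref{eq:param_error_w_bound} and \eqref{eq:param_shift_bound}.
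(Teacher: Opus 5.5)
Your proposal is correct and follows the paper's proof essentially step for step. Both arguments use the one-step shift identity $\Delta\hat\theta_t=K_t\Pi_te_t+K_tW_tw_t$, then the error recursion $e_{t+1}=(I_{\ntheta}-K_t\Pi_t)e_t-K_tW_tw_t$, then unrolling from $e_0=G_0\xi_0$ (which you make explicit as an induction via $(I_{\ntheta}-K_j\Pi_j)\Psi_{j,\ell}=\Psi_{j+1,\ell}$, where the paper says ``repeated substitution''), and finally substitution back into the shift identity. Your remarks on the row-zeroing safeguard and Girard reduction are sound, since the identity depends only on which gains $K_t$ are actually applied.
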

The proof of Prop. \ref{prop:disturbance_expression} is in App. \ref{app:proof_disturbance_expression}.
Substituting \eqref{eq:param_error_dist} and \eqref{eq:param_shift_dist} into \eqref{eq:reachability_augmented} expresses the augmented state and control deviations directly in terms of the process disturbances $w_j$. Formally,
\begin{theorem}\label{thm:adaptive_gains}
Define the adaptive disturbance gains
\begin{align}
\hspace{-7pt}\Mparam{k}
&=
\textstyle\sum_{j=0}^{k-1}
\left(
\TPhit_{k,j}E_j
+\BPhit_{k,j}K_j\Pi_j
\right)\Psit_{j,0}G_0, \\
\hspace{-7pt}\Mdist{k,j}
&=\TPhit_{k,j}F_j
+
\BPhit_{k,j}K_jW_j
\nonumber\\&-\textstyle\sum_{t=j+1}^{k-1}
\left(
\TPhit_{k,t}E_t
+\BPhit_{k,t}K_t\Pi_t
\right)\Psit_{t,j+1}K_jW_j,
\end{align}
where $\Phit_{k,j}:=\begin{bmatrix}\TPhit_{k,j}&\BPhit_{k,j}\end{bmatrix}$, with $\TPhit_{k,j} \in\mathbb{R}^{(\nx+\ntheta+\nuu)\times\nx}$ and $\BPhit_{k,j} \in\mathbb{R}^{(\nx+\ntheta+\nuu)\times\ntheta}$. Then the state and control deviations satisfy
\begin{equation}\label{eq:adaptive_state_disturbance}
\begin{bmatrix}
\Delta s_k\\ \Delta u_k
\end{bmatrix}
=\Mparam{k}\xi_0+\textstyle\sum_{j=0}^{k-1}\Mdist{k,j}w_j.
\end{equation}
\end{theorem}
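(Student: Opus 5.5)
The plan is to start from the SLS response \eqref{eq:reachability_augmented} and substitute the expressions for $e_j$ and $\Delta\hat\theta_j$ from Prop.~\ref{prop:disturbance_expression}. The result is a double sum over the disturbances $w_\ell$ plus a term in $\xi_0$. Swapping the order of summation and collecting the coefficients of each $w_j$ should then give $\Mdist{k,j}$, and the coefficient of $\xi_0$ should give $\Mparam{k}$.

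First, split $\Phit_{k,j}=\begin{bmatrix}\TPhit_{k,j}&\BPhit_{k,j}\end{bmatrix}$ conformally with the stacked disturbance $\begin{bmatrix}F_jw_j+E_je_j\\ \Delta\hat\theta_j\end{bmatrix}$. Each summand becomes
\[
\TPhit_{k,j}F_jw_j+\TPhit_{k,j}E_je_j+\BPhit_{k,j}\Delta\hat\theta_j .
\]
Next, insert \eqref{eq:param_error_dist} and \eqref{eq:param_shift_dist}. The terms involving $\xi_0$ combine as $(\TPhit_{k,j}E_j+\BPhit_{k,j}K_j\Pi_j)\Psit_{j,0}G_0\xi_0$. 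Summing over $j\in[k]$ gives $\Mparam{k}\xi_0$ directly.

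The direct $w_j$ terms are $\TPhit_{k,j}F_jw_j$ and the $K_jW_jw_j$ term from $\Delta\hat\theta_j$. Together they give $(\TPhit_{k,j}F_j+\BPhit_{k,j}K_jW_j)w_j$.

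The remaining terms are the historical ones. For each $j$ they read
\[
-\sum_{\ell=0}^{j-1}(\TPhit_{k,j}E_j+\BPhit_{k,j}K_j\Pi_j)\Psit_{j,\ell+1}K_\ell W_\ell w_\ell .
\]
This is where the bookkeeping is needed, and I expect it to be the main (if mild) obstacle. The sum runs over the triangular index set $0\le \ell<j\le k-1$. I would reindex it as $0\le \ell\le k-2$, $\ell+1\le j\le k-1$, and then rename $(j,\ell)\to(t,j)$. The coefficient of $w_j$ becomes
\[
-\sum_{t=j+1}^{k-1}(\TPhit_{k,t}E_t+\BPhit_{k,t}K_t\Pi_t)\Psit_{t,j+1}K_jW_j .
\]
For $j=k-1$ this sum is empty, consistent with the convention that the empty sum is zero. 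Adding it to the direct $w_j$ terms yields exactly $\Mdist{k,j}$.

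To finish, I would check the dimensions: each block of $\Phit_{k,j}$ maps into $\mathbb{R}^{\nx+\ntheta+\nuu}$, so $\Mparam{k}$ and $\Mdist{k,j}$ have compatible shapes. I would also note that the theorem is a pure algebraic identity, since the causality assumptions needed are already encoded in Prop.~\ref{prop:disturbance_expression}. Combining the $\xi_0$ terms and the collected $w_j$ coefficients then gives \eqref{eq:adaptive_state_disturbance}.
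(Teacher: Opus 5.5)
Your proposal is correct and follows essentially the same route as the paper: partition $\Phit_{k,j}$, substitute the closed forms from Prop.~\ref{prop:disturbance_expression}, and collect the coefficients of $\xi_0$ and each $w_j$ by exchanging the triangular double sum. The only difference is cosmetic: the paper first regroups with the one-step identity $\Delta\hat\theta_t=K_t\Pi_te_t+K_tW_tw_t$, so that only \eqref{eq:param_error_dist} needs substituting, whereas you substitute both \eqref{eq:param_error_dist} and \eqref{eq:param_shift_dist} directly and write out the index swap explicitly.
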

The proof of Thm. \ref{thm:adaptive_gains} is in App. \ref{app:proof_adaptive_gains}. 
Notably, Thm.~\ref{thm:adaptive_gains} allows us to compute both robust reachable sets and an adaptive feedback controller from the SLS responses. Specifically, since $\xi_0\in\mathcal{B}^{p_0}_\infty$ and $w_j\in\mathcal{B}^{\nx}_\infty$, \eqref{eq:adaptive_state_disturbance} yields the following zonotopic reachable set for the augmented state and control:
\begin{equation}\label{eq:adaptive_reachable_set}\small
\begin{aligned}
\begin{bmatrix}s_k\\u_k\end{bmatrix}
&\in \mathcal{R}_k :=
\left\{\begin{bmatrix}p_k\\v_k\end{bmatrix}\right\}
\oplus \Mparam{k}\mathcal{B}^{p_0}_\infty
\oplus
\bigoplus_{j=0}^{k-1}
\Mdist{k,j}\mathcal{B}^{\nx}_\infty.\hspace{-5pt}
\end{aligned}
\end{equation}
Crucially, the adaptive disturbance gains $\Mparam{k}$ preserve correlations between parameter errors, estimate updates, and process disturbances, avoiding the conservatism of bounding them independently. 
To recover the adaptive controller, we assemble the optimized SLS responses into block lower-triangular matrices $\Phis$ and $\Phiu$, such that $[\Phis]_{k,j}=\Phis_{k,j}$ and $[\Phiu]_{k,j}=\Phiu_{k,j}$. Since $\Phis_{j+1,j}=I_{\nx+\ntheta}$, $\Phis$ has identity diagonal blocks and hence is causally invertible. Thus, following the standard SLS realization \cite{anderson2019system}, the feedback gains are
\begin{equation}\label{eq:adaptive_feedback_gain}
\mathcal{K}=\Phiu(\Phis)^{-1}.
\end{equation}
Denoting the $(k,j)$ block of $\mathcal{K}$ by $\mathcal{K}_{k,j}=[\mathcal{K}]_{k,j}$, with $\mathcal{K}_{k,j}=0$ for $j>k$, we write the associated causal adaptive policy as
\begin{equation}\label{eq:adaptive_feedback_policy}
u_k=v_k+\textstyle\sum_{j=1}^{k}
\mathcal{K}_{k,j}(s_j-p_j),
\end{equation}
where $p_j=[z_j^\top,\hat\theta_0^\top]^\top$ and
$s_j=[x_j^\top,\hat\theta_j^\top]^\top$.
Thus, the controller responds to both physical-state deviations and online parameter-estimate updates, while the corresponding reachable sets enable robust constraint enforcement.

\subsection{Robust Constraint Satisfaction}\label{subsec:robust_constraints}
To ensure robust constraint satisfaction, we require $g_k(x_k,u_k)\leq 0_{n_c}$ for all $k\in[N]$ and $g_N(x_N)\leq 0_{n_c^f}$ for every admissible $\xi_0$ and disturbance realization $w_j$. We achieve this by inflating the constraints of the nominal trajectory optimization to account for the adaptive closed-loop reachable set \eqref{eq:adaptive_reachable_set}. First, we linearize the constraints along the nominal trajectory:
{
\begin{equation}
\begin{aligned}
\hspace{-4pt}\textstyle \Gamma_k
&\textstyle:=\frac{\partial g_k}{\partial x}\big|_{(z_k,v_k)}, \, 
\Gamma_N
\textstyle:=\frac{\partial g_N}{\partial x}\big|_{z_N},\, \ 
\Lambda_k
\textstyle:=\frac{\partial g_k}{\partial u}\big|_{(z_k,v_k)}.\hspace{-12pt}
\end{aligned}
\end{equation}}

\noindent \looseness-1To construct constraint matrices that are dimensionally compatible with $s_k$, we augment the constraint Jacobians with zero blocks for the parameter estimate indices, as the physical state and input constraints do not directly depend on $\hat\theta_k$: 
\begin{equation}\label{eq:augmented_constraint_block}
\begin{aligned}
P_k &:= \begin{bmatrix}\Gamma_k & 0_{n_c \times \ntheta} & \Lambda_k\end{bmatrix}, \quad \forall k\in [N], \\
P_N &:= \begin{bmatrix}\Gamma_N & 0_{n_c^f \times (\ntheta + \nuu)}\end{bmatrix}.
\end{aligned}
\end{equation}
Using \eqref{eq:augmented_constraint_block} and the adaptive gains \eqref{eq:adaptive_state_disturbance}, robust constraint satisfaction follows if the following condition holds:
\begin{corollary}\label{cor:robust_constraint_satisfaction}
Robust constraint satisfaction is ensured if, for all $k \in [N]$,
\begin{equation}\label{eq:robust_constraint_combined}
g_k(z_k, v_k) + \beta_k \le 0_{n_c}, \quad
g_N(z_N) + \beta_N \le 0_{n_c^f},
\end{equation}
where, for all $k \in [N+1]$,
\begin{equation}\label{eq:beta_definition}
\beta_k := \|P_k\Mparam{k}\|_{1,r}
+\textstyle\sum_{j=0}^{k-1}\|P_k\Mdist{k,j}\|_{1,r}
+b^\textrm{lin}_k,
\end{equation}
and $b_k^\textrm{lin}$ is the correction factor from \cite[Eq. 22]{zhan2026robustly} accounting for nonlinear constraints.
\end{corollary}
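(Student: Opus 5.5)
The argument is a standard tightening: a first-order expansion of each constraint about the nominal trajectory, a row-wise worst-case bound on the linear term using the zonotopic reachable set of Thm.~\ref{thm:adaptive_gains}, and the term $b^{\mathrm{lin}}_k$ to absorb the higher-order remainder.

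\textbf{Step 1 (linear part).} Fix $k\in[N]$ and any admissible $\xi_0\in\mathcal{B}^{p_0}_\infty$ and $w_j\in\mathcal{B}^{\nx}_\infty$. By Thm.~\ref{thm:adaptive_gains}, the closed-loop deviations are
\[
\begin{bmatrix}\Delta s_k\\ \Delta u_k\end{bmatrix}=\Mparam{k}\xi_0+\textstyle\sum_{j=0}^{k-1}\Mdist{k,j}w_j .
\]
By the construction \eqref{eq:augmented_constraint_block}, the zero block of $P_k$ discards the $\hat\theta_k$ components. Hence
\[
P_k\begin{bmatrix}\Delta s_k\\ \Delta u_k\end{bmatrix}=\Gamma_k(x_k-z_k)+\Lambda_k(u_k-v_k).
\]
For each row $i$, H\"older's inequality ($\ell_\infty$ against $\ell_1$) applied to each term gives
\[
\bar e_i\, P_k\begin{bmatrix}\Delta s_k\\ \Delta u_k\end{bmatrix}\le \bigl[\|P_k\Mparam{k}\|_{1,r}\bigr]_i+\textstyle\sum_{j=0}^{k-1}\bigl[\|P_k\Mdist{k,j}\|_{1,r}\bigr]_i .
\]
This is exactly the support function of the zonotope $P_k\mathcal{R}_k$ recentred at the nominal point. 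It is also where the correlation preservation enters: $\xi_0$ and each $w_j$ appear only once, so there is no double counting of $e_j$ and $\Delta\hat\theta_j$. The terminal case $k=N$ is identical with $P_N$, whose control columns are zero.

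\textbf{Step 2 (nonlinear remainder).} Write
\[
g_k(x_k,u_k)=g_k(z_k,v_k)+\Gamma_k(x_k-z_k)+\Lambda_k(u_k-v_k)+\rho_k ,
\]
where $\rho_k$ is the Taylor remainder. I would invoke \cite[Eq.~22]{zhan2026robustly}, which bounds $\rho_k\le b^{\mathrm{lin}}_k$ row-wise uniformly over the reachable set $\mathcal{R}_k$. This requires that the bound be computed over the same zonotope; since $\mathcal{R}_k$ is given explicitly by \eqref{eq:adaptive_reachable_set}, its interval hull can be used. Summing the three contributions yields
\[
g_k(x_k,u_k)\le g_k(z_k,v_k)+\beta_k\le 0
\]
under \eqref{eq:robust_constraint_combined}, for every admissible realization. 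The terminal constraint follows in the same way.

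\textbf{Main obstacle.} The delicate point is justifying that the actual closed-loop trajectory lies in $\mathcal{R}_k$ for every $\theta\in\Theta_k$, and not only for the LTV model. The quantifier over $\theta$ is handled by the parametrization $e_0=G_0\xi_0$ with $\xi_0$ ranging over the whole unit ball, using the containment $\theta\in\Theta_0$. Dynamics linearization errors are handled by the LEB disturbance $F^{\mathrm{nl}}_kw^{\mathrm{nl}}_k$, which I would treat as extra columns of $F_k$ with $w^{\mathrm{nl}}_k\in\mathcal{B}^{\nx}_\infty$. With that augmentation, Thm.~\ref{thm:adaptive_gains} applies verbatim and the argument above goes through unchanged.
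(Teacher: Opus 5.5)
Your proposal is correct and follows essentially the same route as the paper. You use the representation from Thm.~\ref{thm:adaptive_gains} and bound each row of $P_k\bigl[\Delta s_k^\top\ \Delta u_k^\top\bigr]^\top$ by the $\ell_\infty$--$\ell_1$ dual-norm (H\"older) inequality, applied separately to $\xi_0$ and each $w_j$, to obtain $\|P_k\Mparam{k}\|_{1,r}+\sum_{j}\|P_k\Mdist{k,j}\|_{1,r}$, and then add $b^{\mathrm{lin}}_k$ to absorb the constraint nonlinearity. The only difference is that you spell out the Taylor decomposition of $g_k$ and why the nonlinear closed loop stays in $\mathcal{R}_k$ (via $e_0=G_0\xi_0$ and the LEB disturbance channel), which the paper leaves implicit in its citations.
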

The proof of Cor. \ref{cor:robust_constraint_satisfaction} is available in App. \ref{app:proof_robust_constraint_satisfaction}.

\subsection{Full Optimization Problem for Nonlinear Dynamics}\label{subsec:overall_optimization}

To extend our method to nonlinear dynamics, we augment the LTV-based analysis of Secs. \ref{subsec:method_sls}–\ref{subsec:robust_constraints} with an additional exogenous disturbance channel that bounds the worst-case linearization error, and apply the gain propagation \eqref{eq:sls_gain_prop} to the error dynamics of the nonlinear system, enabling us to account for the affine offset introduced by linearization while robustly bounding the resulting state and input deviations.
Using the initial parameter estimate $\hat\theta_0$, we linearize and obtain
{
    \begin{equation}
    \begin{aligned}
\textstyle A_k := \frac{\partial f}{\partial x}\big|_{(z_k,v_k, \hat \theta_0)},\quad  B_k := \frac{\partial f}{\partial u}\big|_{(z_k,v_k, \hat \theta_0)}, \\ \textstyle E_k := \frac{\partial f}{\partial \theta}\big|_{(z_k,v_k, \hat\theta_0)}.
\end{aligned}
\end{equation}
}

\noindent Following \cite{fang2026gpu}, we construct a valid disturbance scaling matrix $F_k \in \mathbb{R}^{\nx \times 2\nx}$ as a function of $F(\cdot)$, the state and control deviations $\Delta x_k$ and $\Delta u_k$, and the initial parameter set $\Theta_0$. The corresponding disturbance is denoted by $w_k \in \mathbb{R}^{2\nx}$. The LEB must remain valid over all admissible $\theta\in\Theta_0$, $\Delta x_k$, and $\Delta u_k$, with the deviations bounded using Cor. \ref{cor:robust_constraint_satisfaction}. Thus, $F_k$ also depends on the adaptive gains $\Mparam{k}$ and $\Mdist{k,j}$ for $j<k$. We write this over-bound compactly as
\begin{equation}\label{eq:linearization_error}
F_k = \zeta\left(z_k,v_k, F(\cdot), \Mparam{k}, \{\Mdist{k,j}\}_{j=0}^{k-1}, \Theta_0\right),
\end{equation}
where $\zeta$ is described in \cite[Eq. 23]{fang2026gpu}. The corresponding disturbance is $w_k \in \mathbb{R}^{2\nx}$, allowing us to reconstruct \eqref{eq:augment_terms}-\eqref{eq:adaptive_reachable_set} with guarantees for nonlinear systems. 
Using the adaptive gains from Thm. \ref{thm:adaptive_gains}, the RANOCP becomes
\begin{subequations}\label{eq:fastsls_problem}
    \begin{align}
        \hspace{-13pt}\min_{\substack{
        \mathbf{z},\, \mathbf{v}, \mathbf{\Phi}
        }} \ \
        & J(\mathbf{z},\mathbf{v}) + \tilde{H}_0(\Phit) \label{eq:sls_problem_obj} \\[-1mm]
        \text{s.t.}\quad \
        & z_{k+1} = f(z_k, v_k; \hat\theta_0), \quad \forall k \in [N],\quad z_0 = \bar{x}_0,\label{eq:sls_problem_dyn} \\[-0.5mm] 
        & \Phis_{k+1, j} = \bar{A}_k\Phis_{k, j} + \bar{B}_k\Phiu_{k,j},\ \ \Phis_{j+1,j} = I_{\nx + \ntheta}, \nonumber\\[-0.5mm]
        &  \quad\quad\forall j \in [N], \hspace{5pt} \forall k \in [j+1, N - 1], \label{eq:sls_constraint}\\[-0.5mm]
        & g_k(z_k, v_k) + \beta_k \le  0_{n_c}, \qquad \forall k \in [N],\label{eq:rmpc_stage_tightening}\\[-0.5mm]
        & g_N(z_N)+ \beta_N \le 0_{n_c^f}\label{eq:rmpc_term_tightening},
        \vspace{-2pt}
    \end{align}
\end{subequations}
where $\mathbf{z}:=\{z_k\}_{k=0}^{N}$ and $\mathbf{v}:=\{v_k\}_{k=0}^{N-1}$. Equation \eqref{eq:sls_problem_dyn} enforces the nominal dynamics and initial condition, \eqref{eq:sls_constraint} imposes the system-level constraints, and \eqref{eq:rmpc_stage_tightening}-\eqref{eq:rmpc_term_tightening} enforce robust constraint satisfaction via \eqref{eq:robust_constraint_combined}-\eqref{eq:beta_definition}. The objective \eqref{eq:sls_problem_obj} combines the tracking cost $J(\mathbf{z},\mathbf{v})$ with the tube penalty $\tilde{H}_0(\Phit)$ from \cite[Eqs. 49, 52-53]{srinivasan2026safety}. Solving \eqref{eq:fastsls_problem} over a fixed planning horizon yields a robust adaptive trajectory optimization policy, while repeatedly solving it in receding-horizon fashion yields a RAMPC controller.

\begin{remark}[Recursive feasibility]
The resulting A-SLS controller is compatible with the recursive-feasibility construction of \cite{leeman2025guaranteed}. In particular, \eqref{eq:fastsls_problem} can be augmented with a robust positively invariant terminal set and associated terminal controller, together with the shifted nominal and auxiliary trajectory construction in \cite[Eqs. 27-29]{leeman2025guaranteed}. The adaptive disturbance representation in \eqref{eq:adaptive_state_disturbance} and nonlinear linearization-error bound in \eqref{eq:linearization_error} can then be treated as the bounded prediction uncertainty. Under the corresponding assumptions, initial feasibility implies recursive feasibility and robust constraint satisfaction. 
\end{remark}

\section{Results}
We evaluate A-SLS as both a robust trajectory optimization method in Sec. \ref{sec:rob_traj_opt} and a RAMPC policy in Sec. \ref{sec:rob_ada_mpc}. Our solver builds on \cite{fang2026safe}. All simulations use an NVIDIA RTX 4090 GPU (24 GB VRAM), while the hardware experiment uses an Apple M4 Pro CPU (12 cores, 24 GB RAM). We report a log-volume surrogate for the forecast parameter uncertainty sets,
\begin{equation}\label{eq:param_log_volume}
    \log V_{\theta,k}
    := \textstyle\sum_{i=1}^{n_\theta}
    \log \|\bar{e}_i^{(n_\theta)} G_k\|_1,
\end{equation}
and a log-volume surrogate for the positional tubes,
\begin{equation}\label{eq:pos_log_volume}
\log V_{p,k}
:=
\textstyle\sum_{i=1}^{n_p}
\log(
    \|\bar{e}_i^{(n_a)} \Mparam{k}\|_1
    +\sum_{j=0}^{k-1}
    \|\bar{e}_i^{(n_a)} \Mdist{k,j}\|_1
),
\end{equation}
where $n_a=\nx+\ntheta+\nuu$, $n_p=2$ for Dubins car and the planar quadrotor, and $n_p=3$ otherwise.

\subsection{Robust Trajectory Optimization}\label{sec:rob_traj_opt}
\noindent\textbf{Dubins Ablation:} We evaluate two Dubins car models from \cite[Eq. 39]{srinivasan2026safety} ($n_x=4$, $n_u=2$): one with matched uncertainty and one with unmatched uncertainty, each with $n_\theta=2$ and $\theta\in\Theta_0=0.0075\mathcal{B}^2_\infty$. We denote uncertainty as matched if it enters the dynamics through the same channels as the control inputs (i.e., lies in the column space of $B_t$), and unmatched otherwise \cite{sinha2022matched}. For the discretized dynamics with timestep $dt$, the matched and unmatched parameter sensitivities are
$E_k:=dt\begin{bmatrix}0_{2\times2} & I_2\end{bmatrix}^{\top}$
and
$E_k:=dt\begin{bmatrix}I_2 & 0_{2\times2}\end{bmatrix}^{\top}$,
respectively, with $F_k=0.0003I_4$ in both cases. We compare A-SLS with and without LEBs against a non-adaptive SLS formulation. The car moves from $p_{xy}:=(p_x,p_y)=(1.0,1.0)\,\textrm{m}$, with $\hat\theta_0=0_2$, to $(1.0,-1.0)\,\textrm{m}$ while avoiding an obstacle centered at $c_\textrm{obs}=(0.95,0.0)\,\textrm{m}$ with radius $r_\textrm{obs}=0.35\,\textrm{m}$. We use horizon $N=70$ and timestep $dt=0.05\,\textrm{s}$.
For each ablation, we run 10 adversarial-disturbance rollouts with a fixed worst-case disturbance primitive $w_k=w\in\{1,-1\}^{4}$ for all $k\in[N]$ and the true parameter selected from a corner of $\Theta_0$, together with 30 random rollouts where $w_k\sim U[-1,1]$ and $\theta$ is sampled uniformly from $\Theta_0$. For the matched and unmatched cases, A-SLS with LEBs reduces the log positional tube volume by $19\%$ and $11\%$, shrinking the physical position-tube volume to $18\%$ and $53\%$ of the baseline SLS volume, respectively, as shown in Tab. \ref{tab:combined_rollout_results}. Online parameter learning also reduces tracking error. Removing the LEBs further reduces tube volume but sacrifices robustness to linearization error, producing tube-containment rates below $100\%$ in the unmatched case. Overall, these results show that A-SLS reduces conservativeness and tracking error under matched and unmatched uncertainty, while LEBs ensure robust containment under nonlinear dynamics.

\noindent\textbf{Dubins 2-Obstacle Baseline:} We compare A-SLS with LEBs against an adaptive control contraction metric (CCM) method~\cite{kohler2021robust} and an adaptive disturbance-feedback (DF) method without system-level constraints~\cite{sinha2022matched} on the unmatched Dubins dynamics. The car moves from $(0.0,1.0)\,\textrm{m}$ to $(0.25,-1.0)\,\textrm{m}$ while avoiding obstacles centered at $(-0.25,0.20)\,\textrm{m}$ and $(0.25,-0.25)\,\textrm{m}$, each with radius $0.23\,\textrm{m}$. We use $N=75$, $dt=0.05\,\textrm{s}$, 10 adversarial rollouts, and 30 random rollouts. Across rollouts, A-SLS produces smaller positional tubes (Tab. \ref{tab:combined_rollout_results}), enabling a shorter $s$-curve trajectory in Fig. \ref{fig:Dubins_Rollout}(a). The DF approach omits linearization errors, causing adversarial rollouts to leave its tubes despite those tubes being larger than A-SLS (Fig. \ref{fig:Dubins_Rollout}(b-c)). Overall, these results demonstrate that accounting for future parameter learning reduces conservativeness relative to CCM-based methods, while LEBs are essential for guaranteeing robust containment under nonlinear dynamics.

\begin{figure}[t]
    \centering
    \includegraphics[width=\linewidth]{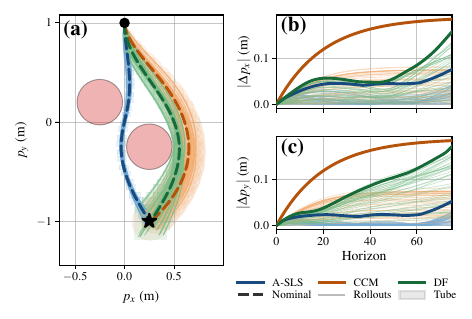}
    \vspace{-25pt}
    \caption{\looseness-1\textbf{Dubins Unmatched Rollout.} (a) Planned trajectories (dashed lines) for each approach, with position tubes and sample rollouts. (b-c) The $p_x$ and $p_y$ tubes over all adversarial and random rollouts. The tighter tubes enable A-SLS to plan a shorter S-curve.}
    \label{fig:Dubins_Rollout}
    \vspace{-12pt}
\end{figure}

\noindent\textbf{Quadrotor Active Information Gathering:} We evaluate A-SLS on trajectory optimization for the 12-state, 4-input quadrotor model from \cite[Eq. 2.19]{sabatino2015quadrotor}, with $n_\theta=3$ unknown unmatched wind-disturbance parameters entering through
$E_k = 0.003\begin{bmatrix}
0_{3\times 6} & I_3 & 0_{3\times3}
\end{bmatrix}^{\top},$
with $\theta\in\Theta_0=0.125\mathcal{B}^3_\infty$ and $\hat\theta_0=0_3$. We use the altitude-varying disturbance
$F(x_k)=0.0009(1-\tanh(15(\bar{e}_3^{(12)}x_k-0.05)))I_{12}$, 
where $\bar{e}_3^{(12)}$ extracts the $p_z$ coordinate. To encourage informative trajectories, we augment the nominal cost with
\begin{equation}\label{eq:info_gather}
J_\textrm{info}(\mathbf{z}) = \lambda\textstyle\sum_{k=1}^{N-1}\|W_k\|_{F},
\end{equation}
using $\lambda=600$, where $W_k$ from \eqref{eq:param_uncertainty_definition} depends implicitly on the nominal state $z_k$ through state-dependent parameter sensitivities $E_k$ and disturbances $F_k$.

The quadrotor moves from $p_{xyz}:=(p_x,p_y,p_z)=(-0.75,-0.75,0.40)\,\textrm{m}$ to $(1.00,0.80,0.40)\,\textrm{m}$ while avoiding a cylindrical obstacle centered at $c_\textrm{obs}=(0.10,0.10)\,\textrm{m}$ with radius $r_\textrm{obs}=0.25\,\textrm{m}$. We use $N=110$, $dt=0.03\,\textrm{s}$, and compare A-SLS+, which uses \eqref{eq:info_gather}, against A-SLS without the information-gathering cost over 120 random and 80 adversarial rollouts. Without $J_\textrm{info}$, the nominal trajectory remains near the goal elevation (Fig. \ref{fig:quadrotor_active_info}(b)), yielding larger positional tubes (Tab. \ref{tab:combined_rollout_results} and Fig. \ref{fig:quadrotor_active_info}(a-b)). By minimizing $J_\textrm{info}$, A-SLS+ seeks higher, lower-disturbance altitudes, reducing parameter uncertainty (Tab. \ref{tab:combined_rollout_results}). The smaller $p_z$ tube tightens the measurement-error bound $W_k$ through reduced linearization error, which further shrinks parameter uncertainty and reinforces tube contraction (Fig. \ref{fig:quadrotor_active_info}(c)). Overall, these results show that A-SLS enables active information gathering by planning trajectories that accelerate parameter learning.

\begin{figure}[t]
    \centering
    \includegraphics[width=\linewidth]{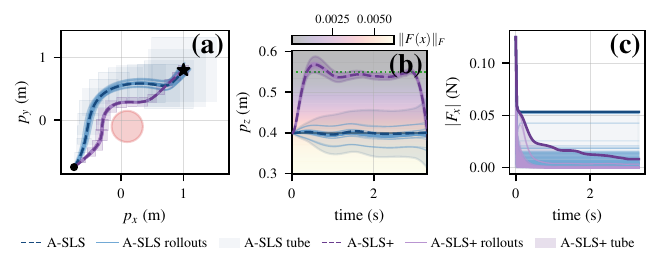}
    \vspace{-25pt}
    \caption{\looseness-1\textbf{12-State Quadrotor Active Information Gathering.} (a-b) Planned quadrotor trajectories (dashed lines) using A-SLS and A-SLS+ (with $J_\textrm{info}$), together with simulated rollouts. (c) Parameter uncertainty over time, with measured parameter errors from the rollouts. A-SLS+ further reduces parameter uncertainty by reaching higher altitudes with lower disturbance (b).}
    \label{fig:quadrotor_active_info}
    \vspace{-20pt}
\end{figure}

\begin{table*}[!t]
\centering
\small
\setlength{\tabcolsep}{3pt}
\medmuskip=1mu
\thickmuskip=2mu
\caption{\textbf{Full-Horizon Rollouts.} Metrics from trajectory optimization experiments (computed per step, except for goal distance).}
\label{tab:combined_rollout_results}
\resizebox{\textwidth}{!}{%
\begin{tabular}{@{}llcccccc@{}}
\toprule
Experiment & Method & Rand. contain. (\%) & Adv contain. (\%) & Goal dist. (m) & Tracking RMSE (m) & Log pos. tube volume & Log param. tube volume \\
\midrule
\multirow{4}{*}{\shortstack[l]{Dubins Matched\\Ablation}} & A-SLS & {\small $\bm{100}$} & {\small $\bm{100}$} & {\small $0.00376 \pm 0.0015$} & {\small $\bm{0.00116 \pm 0.00073}$} & {\small $-10.6 \pm 1.1$} & {\small $\bm{-10.1 \pm 0.6}$} \\
 & A-SLS w/o LEB & {\small $\bm{100}$} & {\small $\bm{100}$} & {\small $\bm{0.00344 \pm 0.0015}$} & {\small $0.00118 \pm 0.00073$} & {\small $\bm{-10.6 \pm 1.1}$} & {\small $\bm{-10.1 \pm 0.6}$} \\
 & SLS & {\small $\bm{100}$} & {\small $\bm{100}$} & {\small $0.0124 \pm 0.0057$} & {\small $0.00752 \pm 0.0042$} & {\small $-8.92 \pm 1.6$} & {\small --} \\
 & SLS w/o LEB & {\small $\bm{100}$} & {\small $\bm{100}$} & {\small $0.0122 \pm 0.0058$} & {\small $0.00762 \pm 0.0043$} & {\small $-8.94 \pm 1.6$} & {\small --} \\
\midrule
\multirow{4}{*}{\shortstack[l]{Dubins Unmatched\\Ablation}} & A-SLS & {\small $\bm{100}$} & {\small $\bm{100}$} & {\small $\bm{0.0225 \pm 0.0075}$} & {\small $0.0199 \pm 0.012$} & {\small $-6.23 \pm 1$} & {\small $-10.1 \pm 0.59$} \\
 & A-SLS w/o LEB & {\small $99.7$} & {\small $97.8$} & {\small $0.0235 \pm 0.0081$} & {\small $\bm{0.0182 \pm 0.012}$} & {\small $\bm{-7.8 \pm 0.8}$} & {\small $\bm{-10.1 \pm 0.6}$} \\
 & SLS & {\small $\bm{100}$} & {\small $\bm{100}$} & {\small $0.0542 \pm 0.017$} & {\small $0.0509 \pm 0.022$} & {\small $-5.6 \pm 1.2$} & {\small --} \\
 & SLS w/o LEB & {\small $99.8$} & {\small $97.8$} & {\small $0.0504 \pm 0.021$} & {\small $0.0485 \pm 0.023$} & {\small $-5.94 \pm 1$} & {\small --} \\
\midrule
\multirow{3}{*}{\shortstack[l]{Dubins 2-Obstacle\\Baselines}} & A-SLS & {\small $\bm{100}$} & {\small $\bm{100}$} & {\small $\bm{0.0281 \pm 0.014}$} & {\small $\bm{0.0142 \pm 0.0087}$} & {\small $\bm{-6.99 \pm 0.8}$} & {\small --} \\
 & CCM & {\small $\bm{100}$} & {\small $\bm{100}$} & {\small $0.0726 \pm 0.028$} & {\small $0.0637 \pm 0.031$} & {\small $-4.13 \pm 1.1$} & {\small --} \\
 & DF & {\small $\bm{100}$} & {\small $95.5$} & {\small $0.128 \pm 0.05$} & {\small $0.0733 \pm 0.042$} & {\small $-5.63 \pm 1.3$} & {\small --} \\
\midrule
\multirow{2}{*}{\shortstack[l]{Quadrotor Info\\Gathering}} & A-SLS & {\small $\bm{100}$} & {\small $\bm{100}$} & {\small $0.0339 \pm 0.027$} & {\small $0.0245 \pm 0.018$} & {\small $-6.75 \pm 2.9$} & {\small $-8.77 \pm 0.24$} \\
 & A-SLS+ (w/ $J_\textrm{info}$) & {\small $\bm{100}$} & {\small $\bm{100}$} & {\small $\bm{0.0118 \pm 0.0035}$} & {\small $\bm{0.00897 \pm 0.0054}$} & {\small $\bm{-10.3 \pm 1.6}$} & {\small $\bm{-12.7 \pm 1.6}$} \\
\bottomrule
\end{tabular}
}
\vspace{-3pt}
\end{table*}

\subsection{Robust Adaptive MPC}\label{sec:rob_ada_mpc}
\noindent\textbf{Planar Quadrotor Baseline:} We compare A-SLS against the DF and CCM baselines in an MPC setting using the inverse-mass parameterized planar quadrotor from \cite[Sec. 4]{sasfi2023robust} ($n_x=6$, $n_u=2$, $n_\theta=1$), with disturbance model $F(x_k)=0.005
\begin{bmatrix}
{0}_{1\times 3} \hspace{-1mm}& \hspace{-1mm}\cos(\bar{e}_3^{(6)}x_k) \hspace{-1mm}&\hspace{-1mm} -\sin(\bar{e}_3^{(6)}x_k) \hspace{-1mm}&\hspace{-1mm} 0
\end{bmatrix}^{\top}\bar{e}_1^{(6)},
$
where $\bar{e}_3^{(6)}x_k$ is the quadrotor angle and $\bar{e}_1^{(6)}$ ensures $F(x_k)\in\mathbb{R}^{\nx\times\nx}$. We use $\hat\theta_0=2.568$ with $\theta\in\Theta_0=[0.99\hat\theta_0,1.01\hat\theta_0]$. The quadrotor moves from $p_{xy}=(0,0)\,\textrm{m}$ to $(2,0)\,\textrm{m}$ while avoiding a circular obstacle centered at $c_\textrm{obs}=(-0.1,1.0)\,\textrm{m}$ with radius $r_\textrm{obs}=0.5\,\textrm{m}$. All solvers use $N=10$, $dt=0.05\,\textrm{s}$, and up to $H=40$ MPC steps to reach the goal.

All approaches reach the goal within the allotted horizon, but A-SLS is the only method with solve times below the real-time budget ($<dt$) across all 40 runs (Tab. \ref{tab:combined_mpc_results}). A-SLS also achieves the lowest control effort, measured by $\|u-u_\textrm{hover}\|$. Baselines using set-membership estimation (SME), which enforce $\Theta_t\subseteq\Theta_{t-1}$ rather than bounded parameter updates, achieve smaller terminal parameter sets but larger positional tubes. 
Specifically, for DF~\cite{sinha2022matched}, SME recursively refines the feasible parameter set using observed state transitions and bounded disturbances, with uncertainty quantified by its Chebyshev center and radius, whereas for CCM~\cite{sasfi2023robust}, SME refines a polytopic uncertainty set by intersecting it with measurement-consistent parameter constraints.
Replacing A-SLS parameter measurements with SME yields parameter errors comparable to the baselines while retaining smaller tube volumes (Tab. \ref{tab:combined_mpc_results}). Overall, these results show A-SLS achieves real-time robust adaptive MPC with much faster solve times and lower control effort than  baselines, while maintaining tight reachable tubes.

\begin{table*}[!htbp]
\centering
\small
\setlength{\tabcolsep}{3pt}
\medmuskip=1mu
\thickmuskip=2mu
\caption{\textbf{Robust MPC.} Metrics from robust MPC with online replanning (computed per step, except for log final $\theta$-tube volume).}
\label{tab:combined_mpc_results}
\resizebox{\textwidth}{!}{%
\begin{tabular}{@{}llcccccc@{}}
\toprule
Experiment & Method & Solve $Q_1$/Med./$Q_3$ (ms) & Tracking RMSE (m) & Log forecast pos. tube volume & Control effort & Param. error & Log final $\theta$-tube volume \\
\midrule
\multirow{4}{*}{\shortstack[l]{Planar\\Quadrotor}} & A-SLS & {\small $5.16/5.46/5.83$} & {\small $7.3e-05 \pm 3.6e-05$} & {\small $-13.7 \pm 2.5$} & {\small $\bm{0.127 \pm 0.22}$} & {\small $0.000167 \pm 0.00013$} & {\small $-6.64 \pm 0.033$} \\
 & A-SLS SME & {\small $\bm{5.15/5.43/5.78}$} & {\small $7.3e-05 \pm 3.6e-05$} & {\small $\bm{-14.1 \pm 2.6}$} & {\small $0.127 \pm 0.22$} & {\small $7.65e-05 \pm 6.9e-05$} & {\small $-8.92 \pm 0.68$} \\
 & CCM & {\small $764/887/1.1e+03$} & {\small $\bm{7.24e-05 \pm 3.6e-05}$} & {\small $-6.57 \pm 1.3$} & {\small $0.343 \pm 0.59$} & {\small $3.97e-05 \pm 3e-05$} & {\small $-9.51 \pm 0.65$} \\
 & DF  & {\small $178/188/207$} & {\small $\bm{7.24e-05 \pm 3.6e-05}$} & {\small $-13.7 \pm 3.6$} & {\small $0.352 \pm 0.62$} & {\small $\bm{3.76e-05 \pm 4.2e-05}$} & {\small $\bm{-9.83 \pm 0.9}$} \\
\midrule
\multirow{2}{*}{\shortstack[l]{Quadruped\\Damping}} & A-SLS & {\small $\bm{15.7/16.1/17.4}$} & {\small $\bm{0.000617 \pm 0.00041}$} & {\small $\bm{-6.97 \pm 1.7}$} & {\small --} & {\small $\bm{0.499 \pm 0.25}$} & {\small $\bm{-14.5 \pm 2.4}$} \\
 & SLS & {\small $14.6/17.4/37.4$} & {\small $0.00174 \pm 0.0014$} & {\small $-6.06 \pm 1.5$} & {\small --} & {\small $2.47 \pm 0.36$} & {\small $3.76 \pm 0$} \\
\midrule
\shortstack[l]{Crazyflie\\Hardware} & A-SLS & {\small $\bm{8.02/8.15/8.35}$} & {\small $\bm{0.000941 \pm 0.00058}$} & {\small $\bm{-10.6 \pm 2.7}$} & {\small $\bm{0.63 \pm 0.058}$} & {\small $\bm{0.293 \pm 0.23}$} & {\small $\bm{0.119 \pm 0.0017}$} \\
\bottomrule
\end{tabular}
}
\vspace{-15pt}
\end{table*}

\noindent\textbf{Quadruped Scalability:} To evaluate scalability, we consider the whole-body quadruped model from \cite[App. E]{fang2026safe} with damped or damaged joints ($n_x=61$, $n_u=12$, $n_\theta=8$). For each uncertain joint,
\begin{equation}
    \tau_{\textrm{eff}}^j = u^j - \theta^j\dot{q}^j,
\end{equation}
where $\tau_{\textrm{eff}}^j$ is the effective torque for joint $j$, $u^j$ is the commanded joint input, $\theta^j$ is the damping, and $\dot{q}^j$ is the joint's velocity. We assume no initial joint damage, $\hat\theta_0=0_8$, for the four calf and four thigh joints, with $\theta\in\Theta_0=[-1.6,1.6]^8$. Disturbance bounds are $0.004$ for base $p_x,p_y$, $0.020$ for thigh joints, and $0.025$ for calf joints. To model real-world mismatch, the true plant additionally includes model error exceeding these assumed disturbance bounds. We use the measurement gate $C_t$ \eqref{eq:gated_information} to admit joint-position information for a parameter only when the corresponding leg is in swing. The task requires stable motion for $H=500$ steps while tracking $v_x=0.15$ and avoiding an inflated circular obstacle centered at $c_\textrm{obs}=(1.0,0.25)\,\textrm{m}$ with radius $r_\textrm{obs}=0.42\,\textrm{m}$. Both solvers use $N=14$ and $dt=0.032\,\textrm{s}$.

We compare A-SLS against SLS, the only baseline that scaled to this system. A-SLS succeeds in $38/40$ runs while maintaining safe obstacle clearance, with an example shown in Fig. \ref{fig:quadruped_rollout_figure}(a); SLS remains stable in only $24/40$ runs. Despite violations of the assumed disturbance bounds, the parameter estimates and uncertainty sets remain near the true parameters (Fig. \ref{fig:quadruped_rollout_figure}(b-c)) using \eqref{eq:gain_update_law}. In contrast, SME requires hard set-containment guarantees, so violations of the assumed disturbance bounds can invalidate its parameter-set update. For non-adaptive SLS, the third-quartile solve time exceeds the real-time budget (Tab. \ref{tab:combined_mpc_results}) because the large initial parameter uncertainty and larger tracking errors require more solver iterations. Nevertheless, A-SLS maintains stable legged motion with a median solve time of $16.1\,\textrm{ms}$, remaining within the real-time budget of $32\,\textrm{ms}$. Overall, these results show that A-SLS scales to high-dimensional whole-body control, achieving improved stability and tracking performance while meeting real-time computational constraints.
\noindent\textbf{Hardware Validation:} Lastly, we validate A-SLS on a Crazyflie with an unknown thrust-to-acceleration parameter ($n_\theta=1$), which captures the combined effects of mass, battery level, and motor health. We use the 12-state, 4-input quadrotor model ($n_x=12$, $n_u=4$), replacing the inverse mass $1/m$ with the uncertain parameter $\theta$. We set $\hat\theta_0=13.5$ and $\Theta_0=[11.5,15.5]$; prior to evaluation, we measured $\theta=12.26$ with a vehicle mass of $35\,\textrm{g}$. Disturbance bounds are $0.004$ for position, $0.008$ for attitude, $0.16$ for angular rates, and $0.04$, $0.016$, and $0.025$ for the velocities $v_x$, $v_y$, and $v_z$, respectively. We use $N=10$ and $dt=0.05\,\textrm{s}$. The task is to move $2\,\textrm{m}$ in $p_y$ while increasing altitude by $0.2\,\textrm{m}$ in $p_z$, subject to a normalized thrust limit of $0.95$ ($1$ is the hardware limit). Because $v_z$ is reliably measured, the gate in \eqref{eq:gated_information} uses only $v_z$ residuals for parameter estimation.

Across 15 hardware runs, the Crazyflie learns the parameter and stabilizes to reach the goal (Fig. \ref{fig:front_figure_quadrotor_hardware.pdf}(b)). Since $\hat\theta_0>\theta$, the anticipated thrust was below the hover-required thrust and the vehicle initially loses altitude. However, A-SLS recovers the altitude as the parameter estimate converges around $0.55\,\textrm{s}$ (Fig. \ref{fig:front_figure_quadrotor_hardware.pdf}(a,c)). A-SLS also maintains real-time performance below the $50\,\textrm{ms}$ budget with a median solve time of $8.15\,\textrm{ms}$ while respecting the thrust limit (Fig. \ref{fig:front_figure_quadrotor_hardware.pdf}(d)). Overall, these results show that A-SLS enables real-time parameter learning and robust stabilization on physical hardware, recovering from inaccurate initial dynamics estimates while respecting actuator constraints.

\section{Conclusion}
We presented A-SLS, a robust adaptive control framework that accounts for guaranteed future parameter learning while preserving correlations between process disturbances, parameter error, and estimate updates to construct tighter reachable tubes. Across trajectory optimization and RAMPC experiments, A-SLS reduces conservatism and tracking error, supports active information gathering, and achieves real-time performance on nonlinear robotic systems, including whole-body quadruped control. Hardware experiments further demonstrate online parameter learning and stabilization on a Crazyflie with an incorrect initial parameter estimate. Future efforts will extend A-SLS to the output-feedback setting \cite{li2026robustness, leeman2026vision} to enable adaptation to uncertain perception.

\bibliographystyle{IEEEtran}
\bibliography{references}

\appendix

\subsection{Proof of Proposition~\ref{prop:disturbance_expression}}
\label{app:proof_disturbance_expression}
\noindent\emph{Proof.}
From the parameter measurement \eqref{eq:parameter_measure} and the center update
\eqref{eq:zonotope_center_update},
\begin{align}
\Delta\hat\theta_t
&=K_t\left(\widetilde\theta_{t+1}-\Pi_t\hat\theta_t\right) \\
&=K_t\Pi_t\left(\theta-\hat\theta_t\right)+K_tW_tw_t \\
&=K_t\Pi_t e_t+K_tW_tw_t. \label{eq:app_shift_one_step}
\end{align}
Since $e_t=\theta-\hat\theta_t$ and $\theta$ is constant over the horizon,
\begin{align}
e_{t+1}
&=\theta-\hat\theta_{t+1}
=e_t-\Delta\hat\theta_t \\
&=(I_{\ntheta}-K_t\Pi_t)e_t-K_tW_tw_t. \label{eq:app_error_recursion}
\end{align}
For clarity, the ordered product in $\Psit_{j,\ell}$ is understood as
\begin{equation}
\Psit_{j,\ell}
=(I_{\ntheta}-K_{j-1}\Pi_{j-1})\cdots
 (I_{\ntheta}-K_{\ell}\Pi_{\ell}), \qquad j>\ell.
\end{equation}
Repeated substitution of \eqref{eq:app_error_recursion}, starting from
$e_0=G_0\xi_0$, gives
\begin{equation}
 e_j
 =\Psit_{j,0}G_0\xi_0
 -\sum_{\ell=0}^{j-1}
 \Psit_{j,\ell+1}K_\ell W_\ell w_\ell,
\end{equation}
which is \eqref{eq:param_error_dist}. Finally, substituting this expression for
$e_j$ into \eqref{eq:app_shift_one_step} yields
\begin{align}
\Delta\hat\theta_j
&=K_j\Pi_j\Psit_{j,0}G_0\xi_0+K_jW_jw_j \nonumber\\
&\quad-\sum_{\ell=0}^{j-1}
K_j\Pi_j\Psit_{j,\ell+1}K_\ell W_\ell w_\ell,
\end{align}
which is \eqref{eq:param_shift_dist}. \hfill$\square$

\subsection{Proof of Theorem~\ref{thm:adaptive_gains}}
\label{app:proof_adaptive_gains}
\noindent\emph{Proof.}
Partition the augmented SLS response as
$\Phit_{k,j}=[\TPhit_{k,j}\;\BPhit_{k,j}]$, so that
\eqref{eq:reachability_augmented} becomes
\begin{align}
\begin{bmatrix}\Delta s_k\\ \Delta u_k\end{bmatrix}
&=\sum_{t=0}^{k-1}\TPhit_{k,t}(F_tw_t+E_te_t)
 +\sum_{t=0}^{k-1}\BPhit_{k,t}\Delta\hat\theta_t. \label{eq:app_sls_partition}
\end{align}
Using the one-step identity
$\Delta\hat\theta_t=K_t\Pi_te_t+K_tW_tw_t$ from
\eqref{eq:app_shift_one_step}, \eqref{eq:app_sls_partition} can be regrouped as
\begin{align}
\begin{bmatrix}\Delta s_k\\ \Delta u_k\end{bmatrix}
&=\sum_{t=0}^{k-1}
\left(\TPhit_{k,t}E_t+\BPhit_{k,t}K_t\Pi_t\right)e_t \nonumber\\
&\quad+\sum_{t=0}^{k-1}
\left(\TPhit_{k,t}F_t+\BPhit_{k,t}K_tW_t\right)w_t. \label{eq:app_sls_regroup}
\end{align}
Substituting \eqref{eq:param_error_dist} into \eqref{eq:app_sls_regroup}, the
coefficient multiplying the initial parameter realization $\xi_0$ is
\begin{equation}
\sum_{t=0}^{k-1}
\left(\TPhit_{k,t}E_t+\BPhit_{k,t}K_t\Pi_t\right)
\Psit_{t,0}G_0
=\Mparam{k}.
\end{equation}
Now fix a disturbance index $j<k$. The disturbance $w_j$ contributes directly
at $t=j$ through both the plant and the estimator, giving
\begin{equation}
\TPhit_{k,j}F_j+\BPhit_{k,j}K_jW_j.
\end{equation}
It also contributes indirectly to every later parameter error $e_t$, $t>j$,
through the term $-\Psit_{t,j+1}K_jW_jw_j$. Hence the total coefficient of
$w_j$ is
\begin{align}
&\TPhit_{k,j}F_j+\BPhit_{k,j}K_jW_j \nonumber\\
&\quad-\sum_{t=j+1}^{k-1}
\left(\TPhit_{k,t}E_t+\BPhit_{k,t}K_t\Pi_t\right)
\Psit_{t,j+1}K_jW_j
=\Mdist{k,j}.
\end{align}
Collecting the coefficients of $\xi_0$ and all $w_j$, $j=0,\ldots,k-1$,
therefore gives
\begin{equation}
\begin{bmatrix}\Delta s_k\\ \Delta u_k\end{bmatrix}
=\Mparam{k}\xi_0+\sum_{j=0}^{k-1}\Mdist{k,j}w_j,
\end{equation}
as claimed. \hfill$\square$

\subsection{Proof of Corollary~\ref{cor:robust_constraint_satisfaction}}
\label{app:proof_robust_constraint_satisfaction}
\noindent\emph{Proof.}
For each time $k$, Theorem~\ref{thm:adaptive_gains} gives the deviation
\begin{equation}
\delta_k:=
\begin{bmatrix}\Delta s_k\\ \Delta u_k\end{bmatrix}
=\Mparam{k}\xi_0+\sum_{j=0}^{k-1}\Mdist{k,j}w_j,
\end{equation}
where $\|\xi_0\|_\infty\le 1$ and $\|w_j\|_\infty\le 1$.
For any row vector $a^\top$, the dual-norm identity for the $\ell_\infty$ ball is
\begin{equation}
\sup_{\|z\|_\infty\le 1}a^\top z=\|a\|_1. \label{eq:app_dual_norm}
\end{equation}
Applying \eqref{eq:app_dual_norm} independently to the initial parameter
uncertainty and each process disturbance gives, row-wise,
\begin{align}
\sup_{\substack{\|\xi_0\|_\infty\le1\\
                 \|w_j\|_\infty\le1}}
P_k\delta_k
&=\|P_k\Mparam{k}\|_{1,r}
 +\sum_{j=0}^{k-1}\|P_k\Mdist{k,j}\|_{1,r}. \label{eq:app_support_stage}
\end{align}
Thus the nominal constraint value, plus the support function of the reachable
uncertainty set in each constraint direction and the linearization-error
backoff $b_k^{\textrm{lin}}$, is no greater than zero whenever
\begin{equation}
 g_k(z_k,v_k)+\|P_k\Mparam{k}\|_{1,r}
 +\sum_{j=0}^{k-1}\|P_k\Mdist{k,j}\|_{1,r}
 +b_k^{\textrm{lin}}\le 0_{n_c}.
\end{equation}
The same argument at the terminal time, with $P_N$ and the response at $N$,
yields
\begin{equation}
 g_N(z_N)+\|P_N\Mparam{N}\|_{1,r}
 +\sum_{j=0}^{N-1}\|P_N\Mdist{N,j}\|_{1,r}
 +b_N^{\textrm{lin}}\le 0_{n_c^f}.
\end{equation}
Therefore the stated tightenings imply robust satisfaction for every
admissible $\xi_0$ and disturbance sequence. \hfill$\square$

\end{document}